\newcommand{\real}{\mathbb{R}}
\def \Xb{\mathbf{X}}
\def \Wb{\mathbf{W}}
\def \Hb{\mathbf{H}}
\theoremstyle{plain}
\newtheorem{theorem}{Theorem}[section]
\newtheorem{lemma}[theorem]{Lemma}
\theoremstyle{definition}
\theoremstyle{remark}
\title{Learning Causal Graphs at Scale: \\A Foundation Model Approach}
\author{%
  Naiyu Yin\\
  Department of Mathematics \\
  Lehigh University \\
  Bethlehem, PA 18018 \\
  \texttt{nay224@lehigh.edu} \\
  % examples of more authors
  \And
  Tian Gao \\
  IBM Research \\
  Yorktown Heights, NY 10598 \\
  \texttt{tgao@us.ibm.com} \\
  \AND
  Yue Yu \\
  Department of Mathematics \\
  Lehigh University \\
  Bethlehem, PA 18018 \\
  \texttt{yuy214@lehigh.edu} \\
  % \And
  % Coauthor \\
  % Affiliation \\
  % Address \\
  % \texttt{email} \\
  % \And
  % Coauthor \\
  % Affiliation \\
  % Address \\
  % \texttt{email} \\
}
\begin{document}

\maketitle

\begin{abstract}

  Due to its human-interpretability and invariance properties, Directed Acyclic Graph (DAG) has been a foundational tool across various areas of AI research, leading to significant advancements. However, DAG learning remains highly challenging, due to its super-exponential growth in computational cost and identifiability issues, particularly in small-sample regimes. To address these two challenges, in this work we leverage the recent success of linear transformers and develop a foundation model approach for discovering multiple order-consistent DAGs across tasks. In particular, we propose Attention-DAG (ADAG), a novel attention-mechanism-based architecture for learning multiple linear Structural Equation Models (SEMs). ADAG learns the mapping from observed data to both graph structure and parameters via a nonlinear attention-based kernel, enabling efficient multi-task estimation of the underlying linear SEMs. By formulating the learning process across multiple tasks as a continuous optimization problem, the pre-trained ADAG model captures the common structural properties as a shared low-dimensional prior, thereby reducing the ill-posedness of downstream DAG learning tasks in small-sample regimes. 
  We evaluate our proposed approach on benchmark synthetic datasets and find that ADAG achieves substantial improvements in both DAG learning accuracy and zero-shot inference efficiency. To the best of our knowledge, this is the first practical approach for pre-training a foundation model specifically designed for DAG learning, representing a step toward more efficient and generalizable down-stream applications in causal discovery.
\end{abstract}

  %DAGs and aim to pre-train a foundation model based on the attention mechanisms. With this pre-trained model, one can infer both the DAG structure and the associated causal mechanisms from observed data. We empirically demonstrate that the foundation models based on attention mechanisms can effectively identify the low-dimensional manifolds that are essential for recovering DAGs. Moreover, these models accurately estimate the nonlinear mapping from data to DAG parameters, enabling strong generalization to heterogeneous and order-consistent data. In particular, by extracting and storing information from a large amount of training data, the pre-trained models exhibit superior performance on test data with limited observations.

\section{Introduction}\label{sec:intro}

% \vspace{-0.15in}

Causality plays a fundamental role in explaining the underlying mechanisms of systems in many scientific and decision-making domains, including computer science, medicine, biology, and economics~\citep{pearl2000models, sachs2005causal, lu2021invariant, subbaswamy2020development}. This has led to significant interest within the machine learning community in developing advanced methods for causal discovery. A common approach to modeling causal relationships is to identify causal models among a set of random variables in the form of Directed Acyclic Graphs (DAGs), which offer a compact, interpretable, and theoretically grounded representation of the underlying data-generating process. 
\begin{figure}[hpt]
    \centering
    \includegraphics[width=1\linewidth]{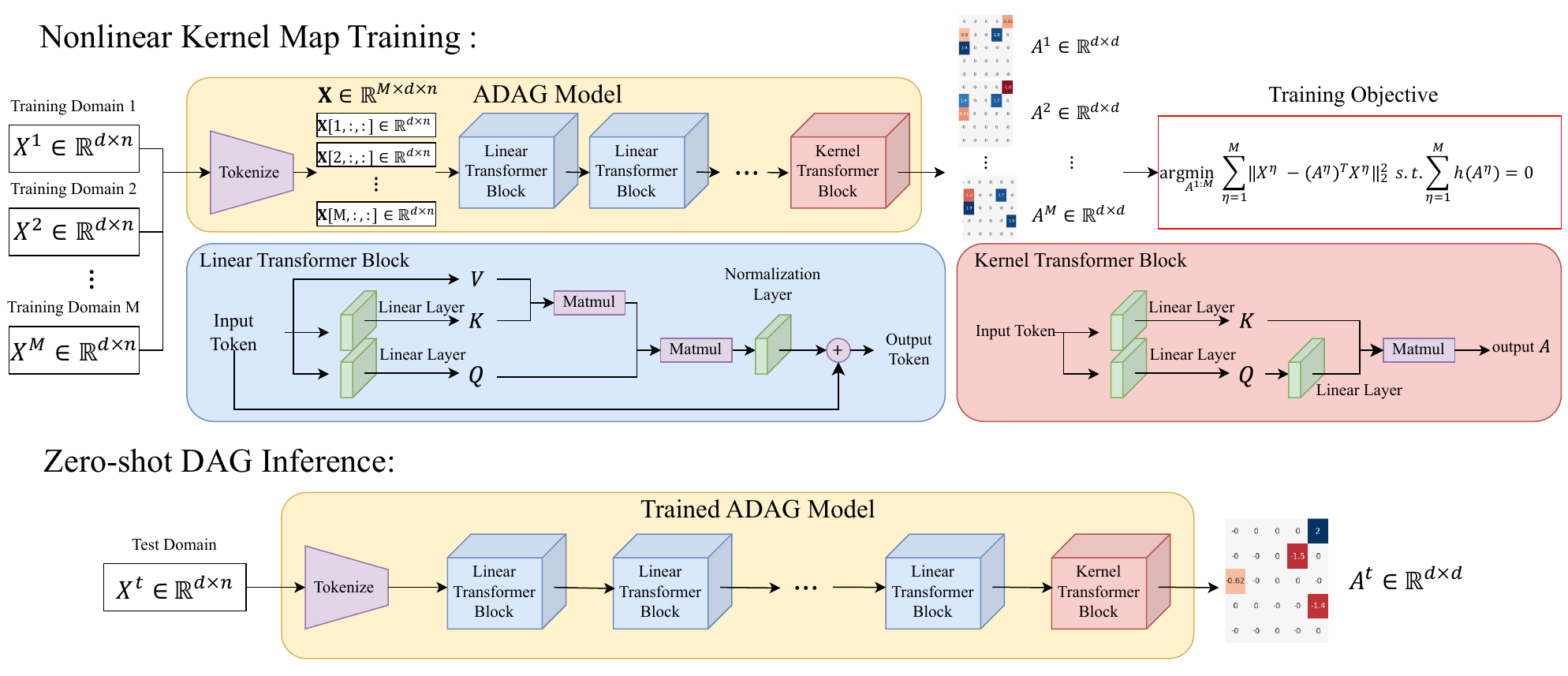}
    \caption{The schematic of the proposed ADAG training and inference procedures. Given order-consistent data from multiple domains, we begin by training a nonlinear kernel map from data to the corresponding hidden DAG. The trained model is then capable of zero-shot DAG inference, allowing it to predict weighted adjacency matrices when given data from new test domains.}
    \label{fig:pipeline}
\end{figure}
Numerous well-established methods have been developed for DAG learning, relying on either combinatorial search, score-based methods, or continuous relaxations. However, learning DAGs from observational data remains highly challenging due to the super-exponential space of possible graph structures, inherent identifiability issues, and data scarcity in real-world applications. Moreover, most existing DAG learning approaches operate on a per-task basis, lacking the ability to generalize across tasks or domains. As a result, there is growing interest in developing foundation models that can transfer knowledge across causal tasks, yet little progress has been made in this direction due to the unique structural constraints and identifiability challenges inherent to DAGs. 

Inspired by the recent success of foundation models and their capacity  to encode vast amounts of transferable information, we propose to address the high computational cost and poor performance in low-sample regimes by pre-training a DAG-learning foundation model that generalizes across tasks to infer order-consistent DAGs accurately and efficiently. Specifically, we introduce a novel attention-mechanism-based formulation for learning Structural Equation Models (SEMs). Although the proposed framework can be extended to more general models, for the purpose of demonstration, we focus primarily on the linear SEM setting in this work. 
% \todo{figure?}
% \begin{wrapfigure}{r}{0.7\textwidth}
%   \centering
%   \includegraphics[width=0.7\textwidth]{Figures/ADAG_pipeline.png}
%   \vspace{-3 mm}
%   \caption{The schematic of the proposed ADAG.}
%   \label{fig:pipeline}
% \end{wrapfigure}
As illustrated in Figure~\ref{fig:pipeline}, the key of our approach is to define a nonlinear kernel map using the attention blocks, which takes observational data as the input and the corresponding weighted adjacency matrices as the output. During the pre-training phase, the model is trained across multiple tasks, which each task treated as a DAG discovery problem (recovering a hidden DAG from a set of data observations). As such, the weighted adjacency matrix is inferred in an unsupervised way, and the model is capable to capture both the structural causal relationships and the associated causal mechanisms. 
%We treat single-task DAG learning as the typical problem of recovering a DAG from a set of \NY{data observations} and train our model across multiple such tasks to learn this kernel function. 
The multi-task training and attention mechanism-based blocks were known to possess advantages in inferring shared low-dimensional structure across tasks and applying such a structure in the downstream test tasks \citep{lu2025transformer}. Therefore, our proposed ADAG architecture is anticipated to reduce the ill-posedness of individual DAG estimation problems and improve recovery of both graph structure and parameters, especially in data-scarce settings. Furthermore, once trained, our model enables zero-shot inference on previously unseen tasks, offering significant improvements in both accuracy and efficiency for downstream causal discovery. Notably, our framework does not require ground truth graphs, as the DAG kernel map is learned automatically using a data reconstruction loss, which differs from many supervised DAG learning approaches~\citep{li2020supervised}.

To the best of our knowledge, this is the first work to propose a practical approach for pre-training a foundation model specifically for DAG learning. Existing efforts to integrate causality with foundation models have very different focuses, such as causal inference~\citep{zhang2024towards} or semantic information extraction from existing large language models~\citep{ban2023query,wan2024bridging,wu2024causality}, as they do not directly train foundation models to estimate DAG structures and underlying causal model parameters from observed data alone.
%goals, such as inferring single causal estimands like average treatment effects, instead of DAG, under a fixed family of SEMs~\citep{zhang2024towards}, or extracting auxiliary causal knowledge using semantic information from large language models~\citep{ban2023query,wan2024bridging,wu2024causality}. Hence, these approaches cannot efficiently produce a complete DAG structure and its parameters from observational data alone. In contrast, our pre-trained ADAG model infers both the DAG structure and causal mechanisms without requiring variable semantics, enabling the quantification of any causal estimand consistent with the learned graph.}

While several prior works~\cite{chen2021multi, lu2023meta, zhang2017causal, zhou2022causal} have developed advanced algorithms for multi-task DAG learning, aiming to simultaneously estimate DAGs from either heterogeneous or order-consistent data. Those approaches primarily improve efficiency or accuracy through joint optimization across tasks. However, they do not generalize to unseen tasks without further optimization or training steps. In contrast, our approach supports zero-shot generalization to new tasks within heterogeneous and order-consistent data settings\footnote{In heterogeneous data settings, the data for new tasks are generated from the same DAG as the training data but with different causal mechanisms. In order-consistent data settings, the data for new tasks are generated from different DAGs and mechanisms, while sharing the same causal ordering as the training data.}, without requiring access to ground truth DAGs during training. This sets the stage for a new direction in causal discovery: building scalable, pre-trained foundation models capable of generalizing structural knowledge across domains.

% setting a potentially new framework for building pre-trained foundation models for causal discovery. % and serving as a potential cornerstone for future research in this area.

\textbf{Major Contributions.} 1) We propose a novel attention-mechanism-based formulation for DAG learning, that learns a nonlinear kernel map from observational data to the underlying causal graph structure and associated parameters. 2) Using a pre-training procedure on multiple order-consistent DAG tasks, we obtain a foundation model for DAG learning. This model captures shared low-dimensional structures and enables zero-shot inference on unseen tasks within both heterogeneous and order-consistent data settings. 3) We empirically demonstrate that our method significantly outperforms both single-task and multi-task DAG learning baselines in terms of inference efficiency and accuracy, especially in small-sample regimes.

% \vspace{-0.15in}

\section{Related Works}
\label{sec:related_work}

% \vspace{-0.15in}

\textbf{Attention Mechanism for Inverse Problems.}
In recent years, the transformer based on the attention mechanism has been increasingly adopted to tackle diverse scientific problems. It has been found that the attention mechanism is capable of modeling complex dependencies within sequential or structured data, leading to novel applications in various domains \citep{guo2023Transformer, ovadia2024vito, yu2024nonlocal, evangelista2023ambiguity, chen2023deformable, cao2021choose}. Unlike many investigations on applying the attention mechanism for forward problems \citep{vladymyrov2024linear,lu2024asymptotic,zhang2024trained}, causal discovery and graph structure learning in general fall in the regime of inverse problems. In inverse problems, the objective is not merely to predict future outputs, but to infer the underlying (possibly linear) relationship that generates the observed data. To the authors' best knowledge, the attention mechanism and transformer in general have been relatively underexplored in the context of inverse problems, despite their potential to complement existing  deep learning strategies for inverse problems, such as  learning regularization hyperparameters~\citep{Afkham2021learning} or designing network architectures  explicitly for stable inversion~\citep{evangelista2025tobe}.

\textbf{Optimization-Based DAG Learning.} 
There has been extensive research on causal discovery and DAG learning, leading to a wide array of well-established algorithms. Traditional DAG learning approaches either quantify conditional independence relationships among variables through statistical tests or search for the optimal DAG by maximizing a predefined score using various search strategies. A notable shift was introduced by \cite{zheng2018dags}, who proposed reformulating the DAG learning problem from a combinatorial optimization task into a constrained continuous optimization problem, allowing for the use of gradient-based optimization methods. Subsequent works have improved various aspects of the continuous optimization framework, including enhancing the acyclicity constraint~\citep{yu2019dag}, optimization procedures~\citep{bello2022dagma, ng2020role}, efficiency and scalability~\citep{yu2021dags}, relaxing the strong assumptions of SEMs~\citep{khemakhem2021causal, yin2024effective}, and extending the framework to nonlinear SEM formulations~\citep{zheng2020learning, lachapelle2019gradient}. Despite these advances, DAG learning remains NP-hard~\citep{chickering2002optimal}, with the number of possible DAGs growing super-exponentially with the number of variables. Although the continuous optimization framework improves tractability, it does not eliminate the high computational cost. Moreover, existing approaches require a sufficient amount of data that accurately captures all underlying causal dependencies. As a result, their performance degrades when data is scarce. This limitation hinders their applicability in real-world scenarios, where data is often limited due to infrequent occurrences or high data collection costs. These two challenges motivate us to develop pre-trained models that are expressive enough to encode rich and transferable representations from available training data and generalize to unseen data, enabling efficient and accurate DAG inference even in small-data regimes.

\textbf{Multi-Task DAG Learning.} Since our training procedure involves recovering the underlying mechanisms between data observations and DAGs by jointly performing DAG learning across multiple training domains, the problem naturally falls within the multi-task learning setting. We therefore review existing works on multi-task DAG learning. In particular, \citet{chen2021multi} assumes that data from different tasks are generated by distinct DAGs that share a common topological order. They propose a DAG learning approach that simultaneously estimates task-specific DAGs by introducing a regularization term enforcing consistent topological ordering across tasks. In contrast, \citet{lu2023meta} and \citet{zhang2017causal} assume that the underlying DAG structure is shared across tasks, while the data generation mechanisms (i.e., causal mechanisms) vary. \citet{lu2023meta} formulates this problem as a constrained bilevel optimization and solves it using a bilevel primal-dual algorithm. \citet{zhang2017causal} focuses on time-varying causal mechanisms and proposes a constraint-based method to identify variables with changing causal relationships. While these approaches have shown promising results in multi-task DAG learning, they do not explicitly learn the mapping from data observations to DAG structures. As a result, the learned causal relationships or mechanisms may not generalize to unseen data observations that encode different causal structures or mechanisms from those in the training data. 

%order consistent or not?
% \vspace{-0.15in}

\section{Mathematical Formulation}
\label{sec:attention4DAG}

% \vspace{-0.15in}

\subsection{Linear Structural Equation Model for Multi-Domain Data}

\vspace{-0.1in}

We begin by introducing our proposed attention-mechanism-based formulation for DAG learning. Given a set of $d$ random variables $X = [X_1, X_2, \cdots, X_d] \in \real^d$, the linear Structural Equation Model (SEM) with additive noise is defined as:
\begin{equation}
    X = A^TX + E
\end{equation}
where $A \in \mathbb{R}^{d \times d}$ is the weighted adjacency matrix representing the DAG. The entries of $A$ encode both the causal structure and the causal mechanisms, such that a nonzero entry \( A[i, j] \neq 0 \) indicates a causal link \( X_i \rightarrow X_j \).  The noise vector $E = [E_1, E_2, \cdots, E_d] \in \mathbb{R}^d$ consists of mutually independent exogenous noise variables.

In our setting, we assume the availability of $M$ domains of observations over the same set of $d$ variables, denoted as $\mathcal{D} = \{X_{1:d}^{\eta}\}_{\eta=1}^{M}$. The corresponding SEM for the $\eta$-th domain is
\begin{equation}\label{eq:sem}
    X_{1:d}^{\eta} = ( A^{\eta})^TX_{1:d}^{\eta} + E
\end{equation}
where $A^{\eta} \in \mathbb{R}^{d \times d}$ denotes the adjacency matrix of the DAG in the $\eta^{\text{th}}$ domain. for each domain \(\eta\), \(n\) observations of \(X_{1:d}^{\eta}\) are collected. We denote the collected data in domain \(\eta\) as \(\{X^{\eta}_{1:d}(j)\}_{j=1}^n\). Our goal is to infer the corresponding $A^\eta$ from the data $\{X^{\eta}_{1:d}(j)\}_{j=1}^n$ on the $\eta-$th domain. %A review of existing literature reveals that most prior work on multi-domain causal discovery focuses on domains with two types of structural consistency: (1) heterogeneous data, where all domains share the same DAG structure but differ in their domain-specific causal mechanisms, and (2) order-consistent data, where domains are governed by distinct DAGs and mechanisms but preserve a shared topological ordering. Our proposed approach is designed to handle both of these settings.

In this work, we focus on two relatively simple multi-domain causal discovery settings that exhibit certain shared structural consistencies across domains: (1) heterogeneous data, where all domains share the same DAG structure but differ in their domain-specific causal mechanisms, and (2) order-consistent data, where domains are governed by distinct DAGs and mechanisms but preserve a shared topological ordering. The key is to design an attention mechanism-based architecture and pre-train a foundation model. Naturally, a ultimate objective would be to obtain a foundation model for general graphs. However, we argue that the current two settings highlight the model capability in discovering structural consistency, and represents a first step toward the vision of foundation model pre-training for DAG learning. We leave the extension to fully general graph structures as an important direction for future research.

% \vspace{-0.1in}

\subsection{Nonlinear Map from Data to Graph Structure and Parameters}

% \vspace{-0.1in}

%In both \NY{settings above}, data across different domains are generated from distinct weighted adjacency matrices. 
To recover the domain-specific adjacency matrix $A^{\eta}$ from the observed data $X_{1:d}^{\eta}$, we propose leveraging the expressive power of attention mechanisms to learn the underlying mapping between the observed data and the corresponding causal structure. To this end, we first model the weighted adjacency matrix $A^{\eta}$ as a function of the data $X_{1:d}^{\eta}$. 
%Assume for each domain \(\eta\), \(n\) observations of \(X_{1:d}^{\eta}\) \NY{are} collected. 
Given the collected data in domain \(\eta\), \(\{X^{\eta}_{1:d}(j)\}_{j=1}^n\), we first transfer it to tokens \(\Xb^{\eta}(1:n)\):
\begin{equation}
    \Xb^{\eta}(1:n) = \Big(\Xb^{\eta}(1); \Xb^{\eta}(2); \cdots; \Xb^{\eta}(n)\Big) = \Big(X_{1:d}^{\eta}(1);X_{1:d}^{\eta}(2); \cdots; X_{1:d}^{\eta}(n)\Big)\in \real^{d \times n}.
\end{equation}
Here, each token represents the data from a variable, and it consists a vector of size $n$, concatenating the information from all $n$ samples on this domain. The weighted adjacency matrix \(A^{\eta}\) can then be modeled as a function of the tokens \(\Xb^{\eta}(1:n)\), dependent on trainable parameters \(\Theta\):
\begin{equation}
    % \{X_{1:d}(m)\}_{m=1}^M \rightarrow
    A^{\eta} = A[\Xb^{\eta}(1:n);\Theta].
\end{equation}
We point out that although a linear SEM is considered in this work, the kernel map from the input $\Xb^{\eta}(1:n) \in \real^{d\times n}$ to the output, the weighted adjacency matrix $A^{\eta} \in \real^{d\times d}$, is highly nonlinear. To capture this complex nonlinear relation, we parameterize the function \(A^{\eta}[\cdot;\Theta]\) by designing an \(L\)-layer attention model:
\begin{equation}
    \begin{split}
        \Hb^{\eta}_{\text{in}} =&  \Hb^{\eta}_{(0)} := \Xb^{\eta}(1:n) \in \real^{d\times n}, \\
        \Hb^{\eta}_{(l)} :=& \text{Attn}[\Hb^{\eta}_{(l-1)}; \theta_l]\Hb^{\eta}_{(l-1)} + \Hb^{\eta}_{(l-1)}\in \real^{d\times n}, 1\leq l\leq L, \\
        A^{\eta} :=& \text{Attn}[\Hb^{\eta}_{(L)};\theta_{\text{out}}]  \in \real^{d\times d}, 
        % \Hb^{\eta}_{\text{out}} =& \Hb^{\eta}_{L+1} = A[\Hb^{\eta}_{(L)};\theta]\Xb := A[\Xb^{\eta}_{(L)};\theta_{\text{out}}] \Xb\approx \Xb_{1:d}(1:n) \in \real^{d\times n}
    \end{split}
\end{equation}
where the attention block writes:
\begin{equation}
    \text{Attn}[\Hb^{\eta}_{(l-1)}; \theta_l] = \sigma \Big(\frac{1}{\sqrt{d}}\Hb^{\eta}_{(l-1)}\Wb_l^Q(\Wb_l^K)^T(\Hb^{\eta}_{(l-1)})^T\Big) \in \real^{d\times d}.
\end{equation}
In the $l^{\text{th}}$ attention block, the trainable parameters are \(\theta_l = \{\Wb_l^Q \in \real^{n \times k}, \Wb_l^K \in \real^{n \times k}\}\), and \(\sigma(\cdot)\) is the activation function\footnote{ $\sigma(\cdot)$ is set to be the identity activation function in the paper, because it enables a more efficient implementation using linear attention \citep{liu2025neural}. However, other activation functions can also be used.}. In the last layer, we output the weighted adjacency matrix as:
\begin{equation}
    A^{\eta} = \text{Attn}[\Hb^{\eta}_{(L)};\theta_{\text{out}}] = \Wb^{P,x}_{\text{out}}\sigma \Big(\frac{1}{\sqrt{d}}\Hb^{\eta}_{(L)} \Wb_{\text{out}}^Q(\Wb_{\text{out}}^K)^T(\Hb^{\eta}_{(L)})^T\Big),
\end{equation}
where the trainable parameters are \(\theta_{\text{out}} = \{\Wb^{P, x}_{\text{out}}\in \real^{d\times d}, \Wb_{\text{out}}^Q \in \real^{n\times k}, \Wb_{\text{out}}^K\in \real^{n\times k}\}\). By substituting the above formulation into the SEM in Eq.~\eqref{eq:sem}, we have:
\begin{equation}\label{eq:sem_with_kernel_map}
    X^{\eta}_{1:d}(1:n) = A^T[X^{\eta}_{1:d}(1:n) ;\Theta] X^{\eta}_{1:d}(1:n)   + E(1:n),
\end{equation}
with $\Theta=\{\theta_l\}_{l=1}^L\cup \theta_{\text{out}}$.

% \vspace{-0.1in}

\subsection{Attention Mechanism-Based DAG Learning}\label{sec:dag_learning}

% \vspace{-0.1in}

Similar to prior continuous optimization-based DAG learning methods, we propose to learn a nonlinear kernel map by solving the following optimization problem:
\begin{equation}\label{eq:dag_learning}
    \begin{split}
        & \min_{\Theta := \{\theta_{\text{out}}, \theta_{1:L}\}} \sum_{\eta=1}^M \|X^{\eta}_{1:d}(1:n) - A^T[X_{1:d}^{\eta}(1:n);\Theta]X^{\eta}_{1:d}(1:n)\|_F^2 \\
        & \text{ s.t. } h(A[X_{1:d}^{\eta}(1:n);\Theta]) = 0, \forall \eta \in \{1, 2, \cdots, M\}.
    \end{split}
\end{equation}
Here, $h(A[X_{1:d}^{\eta}(1:n); \Theta]) = h(A^{\eta}) = \text{tr}(e^{A^{\eta} \circ A^{\eta}}) - d = 0$ is the acyclicity constraint proposed in \citet{zheng2018dags}, which ensures that $A^{\eta}$ represents the weighted adjacency matrix of a DAG. Our method does not impose restrictions on the choice of acyclicity constraint; alternative formulations of the DAG constraint from \citet{bello2022dagma} and \citet{zhang2022truncated} can also be used.

% \vspace{-0.1in}

%\subsection{Proposed Algorithm: Attention-DAG (ADAG)}
%\label{sec:Algo}

% \vspace{-0.1in}

As shown in Eq.~\eqref{eq:dag_learning}, our goal is to learn a nonlinear kernel map from data observations to weighted adjacency matrices by jointly performing DAG learning across multiple data domains. This makes the optimization problem in Eq.~\eqref{eq:dag_learning} substantially more challenging than in the single-task DAG learning setting. Although recent methods for single DAG learning~\citep{bello2022dagma,ng2020role,yu2021dags} have improved efficiency by avoiding time-consuming iterative optimization, they are either not directly applicable to our pre-training scenario or yield suboptimal performance. To ensure the accuracy of the weighted adjacency matrices used in learning the nonlinear kernel map, we adopt the augmented Lagrangian method during training and solve a sequence of optimization subproblems.
%We propose a novel algorithm, Attention-DAG (ADAG), to learn the nonlinear kernel map from data observations to weighted adjacency matrices by performing joint DAG learning across multiple data domains. As mentioned in Section~\ref{sec:dag_learning}, we adopt the augmented Lagrangian process. 
With a slight abuse of notation, we re-write the optimization problem in Eq.~\eqref{eq:dag_learning} as:
\begin{equation}\label{eq:optimization_problem}
    \begin{split}
        & \max_{\alpha \in \mathbb{R}} \min_{\Theta} \mathcal{L}_{\text{rec}}\big(X_{1:d}^{1:M}(1:n);A^{1:M}\big) + \frac{\rho}{2} \sum_{\eta=1}^M |h(A^{\eta})|^2 + \alpha \sum_{\eta=1}^M h(A^{\eta}), \\
        & \mathcal{L}_{\text{rec}}\big(X_{1:d}^{1:M}(1:n);A^{1:M}\big) \coloneqq \frac{1}{2n} \sum_{\eta=1}^M \sum_{j=1}^n \sum_{i=1}^d \Big(X_i^{\eta}(j) - A^T\big[X_{1:d}^{\eta}(1:n);\Theta\big][i, :]X_{1:d}^{\eta}(j)\Big)^2.
    \end{split}
\end{equation}
where we denote $A[X_{1:d}^{\eta}(1:n);\Theta]$ as $A^{\eta}$ for simplicity in Eq.~\eqref{eq:optimization_problem}.
For each time, we solve the following optimization with the updated $\alpha$ value:
\begin{equation}
    \begin{split}
        \Theta_{\alpha}^* =& \arg\min_{\Theta} \mathcal{L}_{\text{rec}}\big(X_{1:d}^{1:M}(1:n);A^{1:M}\big) + \frac{\rho}{2} \sum_{\eta=1}^M |h(A^{\eta})|^2 + \alpha \sum_{\eta=1}^M h(A^{\eta}),
    \end{split}
\end{equation}
then update $A^{1:M}$ and $\alpha$ with Eq.~\eqref{eq:update_alpha}.\:
\begin{equation}\label{eq:update_alpha}
    (A^{\eta})_{\alpha}^* \leftarrow A[X_{1:d}^{\eta}(1:n);\Theta_{\alpha}^*], \qquad \alpha \leftarrow \alpha + \rho \sum_{\eta=1}^M h\big((A^{\eta})^*_{\alpha}\big).
\end{equation}
We summarize the proposed algorithm in Algorithm~\ref{alg:sum_array}.

\begin{algorithm}[hpt]
\caption{Attention-DAG (ADAG) Training Process}\label{alg:sum_array}
\begin{algorithmic}[1] % The [1] enables line numbering
    \State \textbf{Input:} Training domain data $X_{1:d}^{1:M}(1:n)$, initial guesses of $\Theta_0$ and $\alpha_0$, progress rate $c\in (0, 1)$, tolerance $\epsilon >0$, threshold $\omega > 0$
    \For{$t \gets 1$ \textbf{to} $n$} 
        \State Solve $\Theta_{t+1} \leftarrow \arg\min_{\Theta} \mathcal{L}_{\text{rec}}\big(X_{1:d}^{1;M}(1:n);A^{1:M}\big) + \frac{\rho}{2} \sum_{\eta=1}^M |h(A^{\eta})|^2 + \alpha_t \sum_{\eta=1}^M h(A^{\eta})$,
        with $\rho$ such that $\sum_{\eta=1}^M h(A^{\eta}_{t+1})< c \sum_{\eta=1}^M h(A^{\eta}_t)$.
        \Comment{Use Adam optimizer}
        \State Update $A^{1:M}_{t+1} \leftarrow A[X_{1:d}^{1:M}(1:n);\Theta_{t+1}]$.
        \State Update $\alpha_{t+1} \leftarrow \alpha_t + \rho \sum_{\eta=1}^M h(A_{t+1}^\eta)$.
        \If{$\sum_{\eta=1}^M h(A_{t+1}^\eta) < \epsilon $}
        \State $\hat{\Theta}= \Theta_{t+1}$ and break.
        \EndIf
    \EndFor
    \State \Return the optimal parameters $\hat{\Theta}$ for the nonlinear kernel map $A[\cdot;\Theta]$.
\end{algorithmic}
\end{algorithm}

\textbf{Nonlinear SEM Extension.} Although we formulate our attention-mechanism-based DAG learning problem under the linear SEM assumption, the idea can be easily extended to the nonlinear SEM setting by first applying nonlinear transformations to the input $X_{1:d}^{\eta}(1:n)$ before multiplying it with weighted adjacency matrices. In fact, we can use shared attention layers for both the nonlinear kernel map and the nonlinear transformation, and multiply $A$ with $\Hb^{\eta}_{(L)}$ instead of $X_{1:d}^{\eta}(1:n)$ in Eq.~\eqref{eq:sem_with_kernel_map}.

\textbf{Discover the Prior to Enhance Identifiability. }While most single-task DAG learning methods mainly consider sufficient rank problems ($n\gg d$), in this work we consider a more challenging scenario with small observed data in each domain. In the latter case, the inverse problem may become under-determined, making the learning non-identifiable. As shown in \cite{lu2025transformer}, the linear transformer is capable of alleviating this issue, by implicitly discovering the low-dimensional shared structure from the training dataset of multiple domains and automatically applying it as prior information in downstream test tasks. Hence, we anticipate that the linear attention mechanism is capable of discovering the shared structural consistencies in our two causal discovery settings, so as to mitigate the deficiency rank issue in small observed data DAG learning problems. Further discussions on the identifiability of ADAG are provided in Appendix \ref{sec:theory}, where we divide the theoretical analysis into two parts: the recovery of a consistent order and the parameters in the weighted adjacency matrix $A$, and show that minimizing the joint data loss from all $M$ domains would help to enhance the identifiability in both parts. In our empirical experiments, we validate these prospectives by showing that: 1) the linear transformer finds a low-dimensional structure in the prior distribution where the ground-truth $A$ is drawn (see Figure \ref{fig-eigenvalues}); and 2) our ADAG is capable of recovering both the correct graph structure and the parameters in $A$, even with a relatively small $n$.

% \vspace{-0.15in}

\section{Experiments}

% \vspace{-0.15in}
In this section we evaluate the performance of our proposed ADAG algorithm, focusing on three key aspects: 1) The attention-mechanism-based kernel map learned by ADAG successfully captures the underlying mechanisms between data observations and their generative processes, which are encoded in a DAG. It generalizes well to unseen domains of data, provided that the data are generated from DAGs that respect either the topological structures or the orderings used during training. 2) We assess ADAG's zero-shot DAG inference performance in terms of both accuracy and efficiency across multiple settings. Empirically, our method consistently outperforms existing state-of-the-art single-task and multi-task DAG learning baselines. 3) We further show that, since the kernel map produced by ADAG captures the common low-dimensional structure across domains, it mitigates the ill-posedness of DAG learning in low-sample regimes, enabling more reliable DAG learning when training samples are limited. We present the main results that best support our claims in the main paper and provide additional results in Supplementary \ref{app:detailed_empirical_results}.

\textbf{Dataset Settings.} We follow standard protocols for generating synthetic graphs and data. The ground-truth DAGs are sampled from Erdős–Rényi (ER) graphs with a degree $k=1$\footnote{Graphs with degree $k$ have an expected number of edges equal to $kd$.}. We consider graphs of varying sizes with $d = \{5, 10, 20\}$ nodes. Data is then generated using a linear SEM, where the coefficients for each edge are drawn from $U[-2, -0.5] \cup U[0.5, 2]$. We consider two types of data: (1) Heterogeneous data, generated following the procedure in \citep{lu2023meta}. For training, we generate $M$ domains, each with $n$ samples, using the same DAG structure but with varying edge weights. An additional 1000 domains are held out for testing and are not seen during pre-training. (2) Order-consistent data, generated following the procedure in~\citep{chen2021multi}. Here, each domain contains $n$ samples generated from different DAGs that share the same topological ordering. We again generate $M$ domains for training and 1000 for zero-shot DAG inference. To illustrate the distinction between these two data types, we visualize their weighted adjacency matrices in Supplementary \ref{app:data_visualization}.

\textbf{Implementation Details.} All experiments are conducted on a single NVIDIA GeForce RTX 3090 GPU using the Adam optimizer for training. Detailed hyperparameter settings—such as the number of layers $nb$, the attention head dimension $k$, batch size, learning rate, total training epochs, and the initialization values for the augmented Lagrangian multipliers $\rho$ and $\lambda$—are provided in Supplementary \ref{app:implementation_details}. Each experiment is repeated for three times, and we report the average results.

% \subsection{Common Graph: Parameter Learning}
% \vspace{-0.06in}

\subsection{Generalization of the Nonlinear Kernel Map}

% \vspace{-0.06in}

\begin{figure}[hpt]
    \centering
    \includegraphics[width=\textwidth]{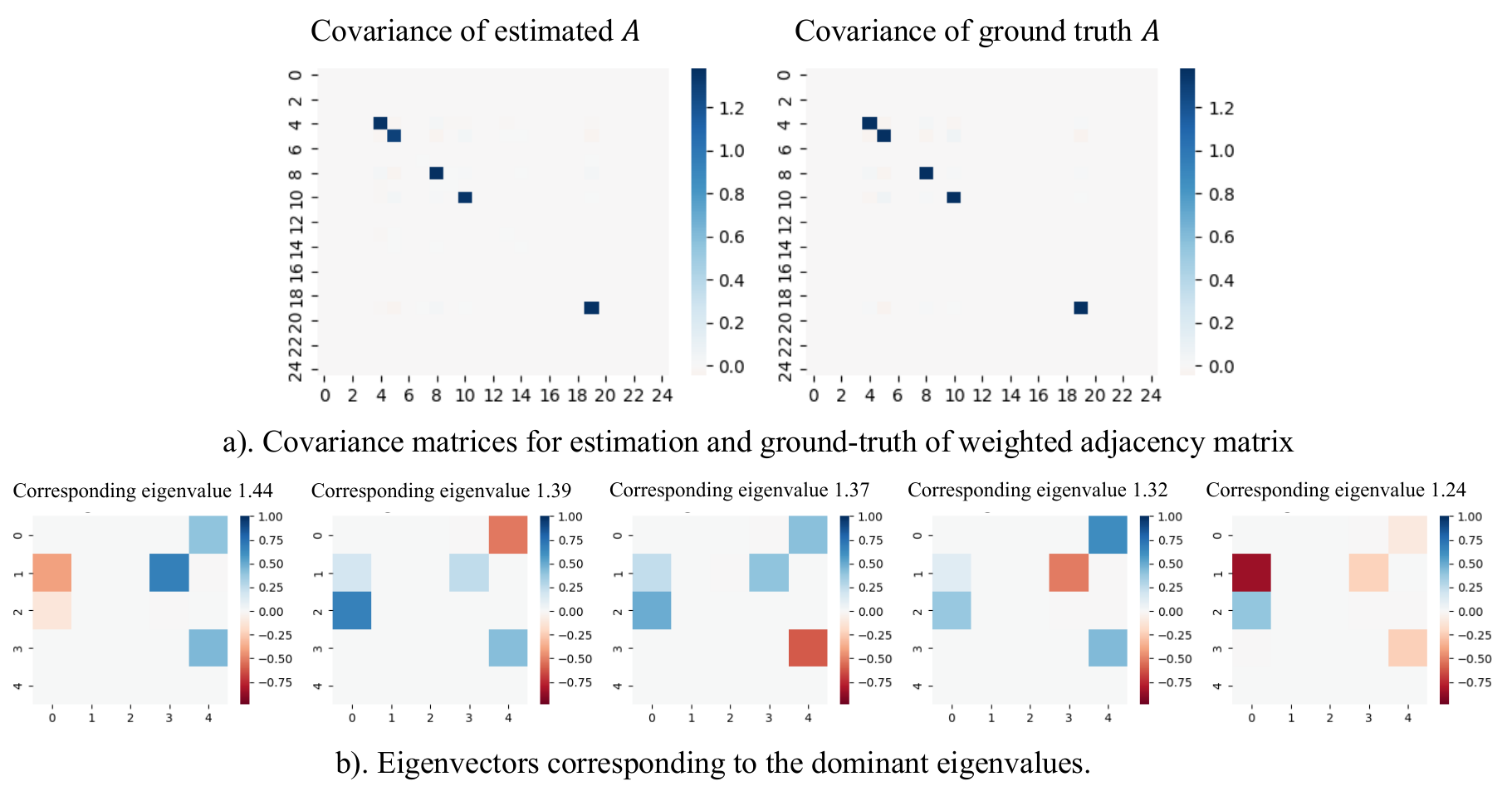}\vspace{-0.1in}
    \caption{Illustration of the learned kernel map on the order-consistent data generated from an ER1 graph with $d=5$. Figure (a) shows the covariance matrices of the estimated and ground-truth weighted adjacency matrices. Figure (b) shows the eigenvectors corresponding to the dominant eigenvalues. The eigenvectors are aligned with the ground-truth DAGs.} \label{fig-eigenvalues}
       \vskip -3mm
\end{figure}
In this section, we show that the attention-mechanism-based kernel map we learned by solving multiple DAG learning problems on training data can generalize well to unseen held-out domains if they are generated from DAGs with either the same topological structures or orderings as those used in the training domains. First, we illustrate that the learned kernel map can identify the common low-dimensional structure across domains. Specifically, we apply the trained kernel map to infer the weighted adjacency matrices $A^{\text{test},\eta} \in \mathbb{R}^{d \times d}$ for 1000 held-out test domains generated from order-consistent data with $d = 5$. As shown in Figure~\ref{fig-eigenvalues}(a), the covariance matrix of the estimated adjacency matrices closely matches that of the ground-truth matrices, indicating that the learned map preserves the underlying structural patterns. We further compute the eigenvalues and eigenvectors of the estimated adjacency matrices and plot those associated with the dominant eigenvalues in Figure~\ref{fig-eigenvalues}(b). As expected, there are five dominant eigenvalues across all test domains, consistent with the structure of an ER1 graph with $d = 5$. We also visualize the five corresponding eigenvectors in Figure~\ref{fig-eigenvalues}(b). These vectors align well with the five edges of the ground-truth DAG, supporting our claim that the learned kernel map accurately captures the common low-dimensional structure across domains with high fidelity.

Beyond its ability to identify the common low-dimensional structure, we also show that the learned kernel map is expressive enough to accurately predict the coefficients of the edges in the DAGs. As shown in Figure~\ref{fig-coefficients}, the weighted adjacency matrices predicted by the learned kernel map closely match the ground-truth weighted adjacency matrices.
\begin{figure}[hpt]
    \centering
    \includegraphics[width=\textwidth]{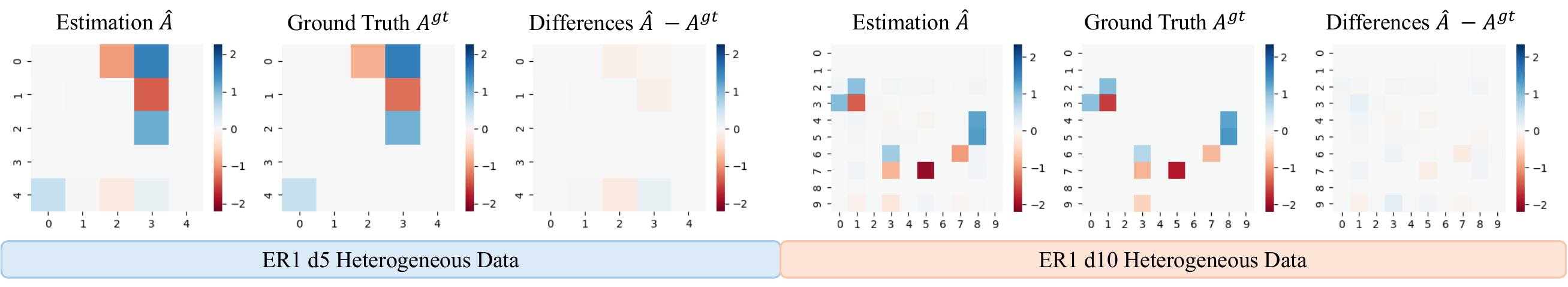}\vspace{-0.1in}
    \caption{Visualization of the estimation and ground-truth weighted adjacency matrices, along with their difference on the heterogeneous data generated from ER1 graph with $d=5$ and $d=10$.} \label{fig-coefficients}
\end{figure}

We observe that this generalization capability hinges on having a sufficient number of domains in the training set. To further investigate this, we conduct an ablation study by varying the number of training domains. As the number of domains increases, the performance of the learned kernel map improves, leading to lower input reconstruction error and reduced relative error between the estimated and ground-truth weighted adjacency matrices. Please refer to Supplementary \ref{app:ablation_M} for more details.

% \subsection{Consistent Order: Manifold Learning}

% \vspace{-0.06in}

\subsection{Zero-Shot DAG Inference}

% \vspace{-0.06in}

We evaluate the zero-shot DAG inference performance of our proposed ADAG algorithm. Specifically, we apply the learned kernel map—trained on the $M$ training domains—to infer the weighted adjacency matrices for 1000 held-out test domains. Following the common practice in DAG learning \citep{zheng2018dags}, a threshold is applied on the inferred weighted adjacency matrices, with a fixed value of 0.3 across all experiments. To assess accuracy, we use the Structural Hamming Distance (SHD), which counts the number of extra, missing, and reversed edges in the inferred DAGs relative to the ground-truth DAGs. 
%Before computing SHD, we threshold the inferred weighted adjacency matrices using a fixed value of 0.3 across all experiments. 
To evaluate how well the edge coefficients are recovered, we compute the relative error between the estimated and ground-truth weighted adjacency matrices. Additionally, we report the runtime of the inference process to demonstrate the efficiency of our approach.

We compare the performance of ADAG with several state-of-the-art single-task and multi-task DAG learning baselines, including NOTEARS~\citep{zheng2018dags}, DAGMA~\citep{bello2022dagma}, CD-NOD~\citep{zhang2017causal}, MetaDAG~\citep{lu2023meta}, and MultiDAG~\citep{chen2021multi}. For the single-task methods NOTEARS and DAGMA, we apply each algorithm independently to every domain. For the multi-task methods MetaDAG and MultiDAG, we run the algorithms jointly on the 1000 test domains to simultaneously learn the domain-specific weighted adjacency matrices. For CD-NOD, which is a constraint-based method that returns a DAG skeleton rather than a weighted adjacency matrix, we concatenate all observations from the 1000 test domains and use the combined dataset as input. Since CD-NOD does not output edge weights and may return undirected edges, we exclude the edge count and relative error metrics for this baseline. To ensure a fair comparison across all methods, we either extensively tune the hyperparameters or adopt the recommended settings reported in the original papers.

Table~\ref{tab:DAG_inference_performance} presents the empirical results of all methods on both heterogeneous and order-consistent datasets. All results are averaged over the 1000 test domains and three independent runs. From Table~\ref{tab:DAG_inference_performance}, we observe that our proposed ADAG method, which leverages information learned from the training domains, achieves the best overall performance in terms of DAG inference accuracy (lowest SHD and relative error) and zero-shot inference efficiency, consistently outperforming all state-of-the-art baselines.

Moreover, we find that, when trained on a sufficient number of domains, the learned kernel map in ADAG accurately captures the underlying low-dimensional causal structure with high precision. This eliminates the need for additional sparsity regularization. As shown in the table, the number of edges in the inferred weighted adjacency matrices closely matches that of the ground-truth DAGs.

\begin{table}[hpt]
    \scriptsize
    \centering
    \caption{Evaluation of the zero-shot DAG inference performance on ER$1$ \textbf{heterogeneous} data and \textbf{order-consistent} data with varying scales.}
    \label{tab:DAG_inference_performance}
    \begin{tabular}{|c|c|cccc|cccc|}
        \hline 
        \multirow{2}{*}{d} & \multirow{2}{*}{Methods} &  \multicolumn{4}{c|}{Heterogeneous Data} & \multicolumn{4}{c|}{Order-consistent Data}\\
        \cline{3-10} 
        & &SHD$\downarrow$ & $\frac{|\hat{A} - A^{gt}|}{|A^{gt}|}\downarrow$  & $\#$ edges & runtime (s)$\downarrow$ &SHD$\downarrow$ & $\frac{|\hat{A} - A^{gt}|}{|A^{gt}|}\downarrow$  & $\#$ edges & runtime (s)$\downarrow$\\
        \hline 
        \multirow{6}{*}{5} & NOTEARS & 0.4320 & 0.1678 & 5.2950 & 0.0742 & .3420 & 0.1667 & 4.2840 & 0.1022 \\
         & DAGMA & 0.4130 & 0.1623 & 5.3310 & 0.3776 & 0.3650 & 0.1711 & 4.3050 & 0.3662 \\
         & MetaDAG & 5.7000 & 1.0105 & 4.7000 & 72.0971 & 5.7000 & 1.0087 & 5.0000 & 72.8796 \\
        %  & CHOD &  &  &  &  \\
         & CD-NOD & 1.5300 & - & - & 2.6404 & 0.7300 & - & - & 2.2296 \\
         & MultiDAG &  0.0470 & 0.1531 & 5.0230 & 0.0151  & 0.0790 & \bf{0.1179} & 4.0630 & 1.9911 \\
         \rowcolor{gray!15}
         & ADAG & \bf{0.0270} & \bf{0.1242} & 4.9730 & \bf{0.0003} & \bf{0.0380} & 0.1186 & 4.0080 & \bf{0.0003} \\
         \hline 
         \multirow{6}{*}{10}& NOTEARS & 1.1280 & 0.2403 & 10.3930 & 0.3880 & 1.1000 & 0.2121 & 10.5970 & 0.4875 \\
         & DAGMA & 0.8690 & 0.1975 & 10.5390 & 0.4564 & 0.8690 & 0.1975 & 10.5390 & 0.7658\\
         & MetaDAG & 17.5000 & 1.0122 & 10.5000 & 213.9723 & 19.0000 & 1.0079 &12.0000  & 232.2737 \\
         % & CHOD &  &  &  &  \\
         & CD-NOD & 3.7000 & - & - & 16.5509 & 4.1200 & - &  -& 21.5743\\
         & MultiDAG & 0.5120 & 0.1884 & 10.3760 & 0.1622 & 0.1810 & 0.1415 & 10.1390 & 2.9333 \\
         \rowcolor{gray!15}
         & ADAG & \bf{0.0750} & \bf{0.1502} & 9.9250 & \bf{0.0004}  & \bf{0.1530} & \bf{0.1267} & 10.0850 & \bf{0.0003} \\
         \hline 
         \multirow{6}{*}{20} & NOTEARS & 3.5900 & 0.3053 & 22.2200 & 2.1522  & 2.6960 & 0.2567 & 22.1110 & 6.0447 \\
         & DAGMA & 2.8450 & 0.2724 & 22.0690  & 0.7000 & 2.8450 & 0.2724 & 22.0690 & 1.2426 \\
         & MetaDAG & 31.4000 & 1.0254 & 22.1000 & 456.5278 & 33.2000 & 1.0215 & 21.0000 & 498.2264 \\
         % & CHOD &  &  &  &  \\
         & CD-NOD & 11.7500 & - & - & 67.4182 & 7.9800 & - & - & 39.6735\\
         & MultiDAG & 0.7890 & 0.2477 & 20.3990 & 0.4530 & 0.6440 & 0.1649 & 21.0040 & 6.0640 \\
         \rowcolor{gray!15}
         & ADAG & \bf{0.2412} & \bf{0.2258} & 20.7000 & \bf{0.0003} & \bf{0.2170} & \bf{0.1490} & 20.0400 & \bf{0.0004} \\
         \hline 
    \end{tabular}
       % \vskip -5mm
\end{table}

\textbf{Evaluation on real dataset. }Our ADAG approach, although effective, requires access to data from a sufficient number of domains. Therefore, it is not suitable to directly apply ADAG to benchmark real datasets where the number of domains is limited. Nevertheless, we design a procedure using the Sachs dataset~\cite{sachs2005causal}, which allows us to sample suitable synthetic data to serve as training domain data. The Sachs dataset contains real-world flow cytometry measurements for modeling protein signaling pathways and comprises 11 continuous variables with 853 observations. We apply our method to pre-train on the sampled training data and then perform DAG inference on the real data observations. Empirical results show that our method outperforms several state-of-the-art single-task baselines given limited observations. Please refer to Supplementary \ref{app:real_data_results} for the detailed data sampling procedure and numerical results.

% \vspace{-0.06in}

\subsection{Low-Sample Robustness}

% \vspace{-0.06in}

We further investigate the capability of our kernel map from ADAG in mitigating the ill-posedness of DAG learning in low-sample regimes. To evaluate this, we reduce the number of available observations per domain in both the training and test sets to $n = 25$ and $n = 50$ for the $d = 5$ setting. We then assess the performance of our method and all baselines under these limited-data scenarios and summarize the results in Table~\ref{tab:low_data_regime}. While other baselines experience severe performance degradation at $n = 25$ and $n = 50$, our method experiences a relatively modest drop in accuracy, demonstrating greater robustness in the low-sample regime. Additionally, it is of interests to see if our pre-trained foundation model can generalize to test tasks with small samples. To this end, we train ADGA with $n = 100$ observations per domain and apply it to infer the weighted adjacency matrices for test domains with only $25$ or $50$ observations per domain.
%consider downstream DAG learning tasks with limited data, we utilize the learned foundation model trained on domain data with $n = 100$ observations per domain and apply it to infer the weighted adjacency matrices for test domains with only $25$ or $50$ observations per domain. 
To perform the downstream tests, we randomly sample from the available $n=25$ or $50$  observations in each test domain, and augment them with duplicates until the total reaches $100$. %, allowing us to use the model as if it were operating in a sufficient-\NY{sample} regime. 
Then, we use this augmented data as the input for inference in ADAG. As shown in Table~\ref{tab:low_data_regime}, models trained with larger $n$ values consistently improve DAG learning accuracy, though at the cost of slightly increased inference time. These results highlight a key advantage of our ADAG approach: its ability to generalize effectively from high-resource to low-resource settings. By pre-training on domains with sufficient data, the model can perform accurate DAG inference even when test domains have severely limited data, simply by leveraging augmentation strategies to align with the pre-training regime. Such flexibility and adaptability make our model particularly well-suited for real-world applications, where data scarcity is common and collecting additional observations can be costly or infeasible.

\begin{table}[hpt]
    \centering
    \scriptsize
    \caption{Evaluation of the zero-shot DAG inference performance on ER$1$ \textbf{heterogeneous} data and \textbf{order-consistent} data under low-samples regime.}
    \label{tab:low_data_regime}
    \begin{tabular}{|c|c|cccc|cccc|}
        \hline 
        \multirow{2}{*}{n} & \multirow{2}{*}{Methods} &  \multicolumn{4}{c|}{Heterogeneous Data} & \multicolumn{4}{c|}{Order-consistent Data}\\
        \cline{3-10} 
        & &SHD$\downarrow$ & $\frac{|\hat{A} - A^{gt}|}{|A^{gt}|}\downarrow$  & $\#$ edges & runtime (s)$\downarrow$ & SHD$\downarrow$ & $\frac{|\hat{A} - A^{gt}|}{|A^{gt}|}\downarrow$  & $\#$ edges & runtime (s)$\downarrow$\\
        \hline 
        % \multirow{6}{*}{100} & NOTEARS & 0.4320 & 0.1678 & 5.2950 & 0.0742 & 0.3420 & 0.1667 & 4.2840 & 0.1022 \\
        %  & DAGMA & 0.4130 & 0.1623 & 5.3310 & 0.3776 & 0.3650 & 0.1711 & 4.3050 & 0.3662 \\
        %  & MetaDAG & 5.7000 & 1.0105 & 4.7000 & 72.0971 & 5.7000 & 1.0087 & 5.0000 & 72.8796 \\
        %  % & CHOD &  &  &  &  \\
        %  & CD-NOD & 1.5300 & - & - & 2.6404 & 0.7300 & - & - &2.2296 \\
        %  & MultiDAG & 0.0470 & 0.1531 & 5.0230 & 0.0151 & 0.3380 & 0.2430 & 4,3220 & 0.0123\\
        %  \rowcolor{gray!15}
        %  & ADAG & 0.0270 & 0.1242 & 4.9730 & 0.0003 & 0.0380 & 0.1145 & 4.0080 & 0.0003 \\
         % \hline 
         \multirow{7}{*}{50}& NOTEARS & 0.6590 & 0.2139& 5.3850 & 0.0757 & 0.5400 & 0.2049 & 4.4020 & 0.0945\\
         & DAGMA & 0.6080 & 0.2028 & 5.4110 & 0.3716 & 0.5440 & 0.2061 & 4.4220 & 0.3591\\
         & MetaDAG & 5.8000 & 1.1018 & 4.7000 & 73.9802 & 5.7000 & 1.0088 & 5.0000 & 72.7758 \\
         % & CHOD &  &  &  &  \\
         & CD-NOD & 2.4000 & - & - & 0.7445 & 1.2900 & - & - & 0.6622 \\
         & MultiDAG & 0.1110 & 0.1751 & 5.0570 & 0.0114 & 0.5830 & 0.2688 & 4.5430 & 0.4502 \\
         \rowcolor{gray!15}
         & ADAG ($n=50$) & 0.0550 & 0.1573 & 4.9750 & \bf{0.0004} & 0.1880 & 0.1719 & 4.1320 & \bf{0.0003} \\
         \rowcolor{gray!15}
         & ADAG ($n=100$) & \bf{0.0540} & \bf{0.1528} & 4.9460 & 0.0006  & \bf{0.0780} & \bf{0.1470} & 4.0260 & 0.0004\\
         \hline 
         \multirow{7}{*}{25} & NOTEARS & 1.1870 & 0.2989 & 5.5710 & 0.0710 & 1.5110 & 0.3320 & 5.2160 & 0.0960 \\
         & DAGMA & 1.1220  & 0.2852 & 5.5880  & 0.3851 & 1.5590  & 0.3408 & 5.2540  & 0.3657 \\
         & MetaDAG & 5.7000 & 1.1016 & 4.7000 & 73.8623 & 5.7000 & 1.0089 & 4.9000 & 72.7697 \\
         % & CHOD &  &  &  &  \\
         & CD-NOD & 3.6000 & - & - & 0.3926 & 2.3400 & - & - & 0.3041 \\
         & MultiDAG & 0.5610 & 0.2458 & 5.3450& 0.0097 & 1.0700 & 0.3133 & 4.9180 & 0.3816\\
         \rowcolor{gray!15}
         & ADAG ($n=25$) & 0.2710 & 0.2295 & 5.0050  & \bf{0.0004} & 0.6840  & 0.2485 & 4.5200 & \bf{0.0003}\\
         \rowcolor{gray!15}
         & ADAG ($n=100$) & \bf{0.1250} & \bf{0.1955} & 4.8870 & 0.0006 & \bf{0.1920} & \bf{0.1886} & 4.0820 & 0.0004 \\
         \hline 
    \end{tabular}
    % \vskip -5 mm
\end{table}

% \vspace{-0.15in}

\section{Conclusion}

% \vspace{-0.15in}

In this paper, we propose ADAG, a novel attention mechanism-based approach for training a foundation model for DAG learning. The core of our method is a nonlinear kernel mapping that captures the relationship between data observations and their underlying causal structures and mechanisms. By jointly training the model with optimization-based DAG learning approach across multiple domains, ADAG is designed to generalize effectively to test domains with unseen DAGs and mechanisms. Empirically, we demonstrate that the learned kernel map accurately captures the common low-dimensional causal structure and predicts edge coefficients with high precision. Evaluations on benchmark synthetic datasets show that ADAG achieves significant improvements in both DAG learning accuracy and zero-shot inference efficiency. Furthermore, our model exhibits strong robustness in low-sample regimes. To the best of our knowledge, this is the first practical approach for pre-training a foundation model specifically for DAG learning, marking an important step toward the development of pre-trained models for causal discovery.

\textbf{Limitations and Broader Impact.} Due to computational resource limit, our experiments focus on learning from data generated by linear models with variable size $d \leq 20$. It would be beneficial to test the proposed method to larger variable sizes. Our work takes a meaningful first step toward building generalizable and data-efficient causal discovery systems by introducing a foundation model pre-trained for DAG learning. This has the potential to benefit domains where causal inference is critical but labeled or interventional data are scarce. 

% \begin{table}
%   \caption{Sample table title}
%   \label{sample-table}
%   \centering
%   \begin{tabular}{lll}
%     \toprule
%     \multicolumn{2}{c}{Part}                   \\
%     \cmidrule(r){1-2}
%     Name     & Description     & Size ($\mu$m) \\
%     \midrule
%     Dendrite & Input terminal  & $\sim$100     \\
%     Axon     & Output terminal & $\sim$10      \\
%     Soma     & Cell body       & up to $10^6$  \\
%     \bottomrule
%   \end{tabular}
% \end{table}

\begin{ack}

NY was supported by the National Institute of Health award 1R01GM157589-01. YY was supported by the AFOSR grant FA9550-22-1-0197 and the National Science Foundation (NSF) award DMS-2427915. Portions of this research were conducted on Lehigh University's Research Computing infrastructure partially supported by NSF Award 2019035.

% Use unnumbered first level headings for the acknowledgments. All acknowledgments
% go at the end of the paper before the list of references. Moreover, you are required to declare
% funding (financial activities supporting the submitted work) and competing interests (related financial activities outside the submitted work).
% More information about this disclosure can be found at: \url{https://neurips.cc/Conferences/2025/PaperInformation/FundingDisclosure}.

% Do {\bf not} include this section in the anonymized submission, only in the final paper. You can use the \texttt{ack} environment provided in the style file to automatically hide this section in the anonymized submission.
\end{ack}

\bibliographystyle{plainnat}   % or alpha, ieee, apalike, etc.
% \bibliography{References}

% References follow the acknowledgments in the camera-ready paper. Use unnumbered first-level heading for
% the references. Any choice of citation style is acceptable as long as you are
% consistent. It is permissible to reduce the font size to \verb+small+ (9 point)
% when listing the references.
% Note that the Reference section does not count towards the page limit.
% \medskip

% {
% \small

% [1] Alexander, J.A.\ \& Mozer, M.C.\ (1995) Template-based algorithms for
% connectionist rule extraction. In G.\ Tesauro, D.S.\ Touretzky and T.K.\ Leen
% (eds.), {\it Advances in Neural Information Processing Systems 7},
% pp.\ 609--616. Cambridge, MA: MIT Press.

% [2] Bower, J.M.\ \& Beeman, D.\ (1995) {\it The Book of GENESIS: Exploring
%   Realistic Neural Models with the GEneral NEural SImulation System.}  New York:
% TELOS/Springer--Verlag.

% [3] Hasselmo, M.E., Schnell, E.\ \& Barkai, E.\ (1995) Dynamics of learning and
% recall at excitatory recurrent synapses and cholinergic modulation in rat
% hippocampal region CA3. {\it Journal of Neuroscience} {\bf 15}(7):5249-5262.
% }

%%%%%%%%%%%%%%%%%%%%%%%%%%%%%%%%%%%%%%%%%%%%%%%%%%%%%%%%%%%%
\newpage 
\appendix
\section{Implementation Details}\label{app:implementation_details}
In this section, we discuss our implementation in terms of three aspects: attention mechanism model, training procedure, and the augmented Lagrangian optimization.

\textbf{Attention Mechanism Model. }We implement our nonlinear kernel map between data observations and weighted adjacency matrices using linear transformers. The key hyperparameters include the number of attention heads $r$, the number of transformer layers $nb$, and the dimension $k$ for the parameters $\Wb_{1:L}^Q$ and $\Wb_{1:L}^K$. These parameters are chosen to ensure that the kernel map is expressive enough to generalize to unseen data observations. Specifically, we set $r = 1$ across all settings. For the number of layers, we use $nb = 15$ when $d = 5$ or $d = 10$, and $nb = 20$ when $d = 20$. The dimension $k$ is used to reduce the input observation size $n$, and we typically set $k = \sqrt{n}$. Accordingly, we choose $k = 10$ for $n = 100$ and $n = 50$, and $k = 5$ for $n = 25$.

\textbf{Augmented Lagrangian Optimization. }We initialize the Lagrangian multipliers with $\alpha = 0$ and $\rho = 1$. The progress rate is set to $c = \frac{1}{4}$, and the convergence tolerance is $\epsilon = 10^{-5}$. For each value of $\alpha$, we evaluate the acyclicity constraint $\sum_{\eta=1}^M h(\hat{A}^\eta)$. If the constraint does not decrease by a factor of $c$ (i.e., is not reduced to $\frac{1}{4}$ of its previous value), we increase $\rho$ by a factor of 10 and repeat the optimization. If the reduction criterion is met, we update the multiplier as $\alpha \leftarrow \alpha + \rho \sum_{\eta=1}^M h(\hat{A}^\eta)$. The optimization terminates once the constraint value satisfies $\sum_{\eta=1}^M h(\hat{A}^\eta) < \epsilon$.

\textbf{Training Procedures.} We use the Adam optimizer across all settings with a fixed batch size of 100. When $\alpha = 0$ and $\rho = 1$, we train for 5000 epochs with an initial learning rate of $3 \times 10^{-4}$. The learning rate decays by a factor of 0.7 every 1000 steps. For subsequent values of the Lagrangian multiplier, we reduce the number of training epochs to 100 and set the learning rate to $1 \times 10^{-4}$.

\section{Data Visualization}\label{app:data_visualization}
We describe the data generation process for both heterogeneous and order-consistent settings. Since both types use the same linear SEM with additive noise to generate observations from ground-truth weighted adjacency matrices, the primary difference lies in the structure of these matrices. Therefore, we illustrate the possible sets of ground-truth adjacency matrices $A^{gt}$ for each setting in Figure~\ref{fig:Data_visual}. As shown in Figure~\ref{fig:Data_visual}(a), the ground-truth weighted adjacency matrices for heterogeneous data share the same DAG structure but differ in their edge weights. According to Figure~\ref{fig:Data_visual}(b), the weighted adjacency matrices for order-consistent data vary in structure but all respect the same underlying topological order.

\begin{figure}[hpt]
    \centering
    \includegraphics[width=1\linewidth]{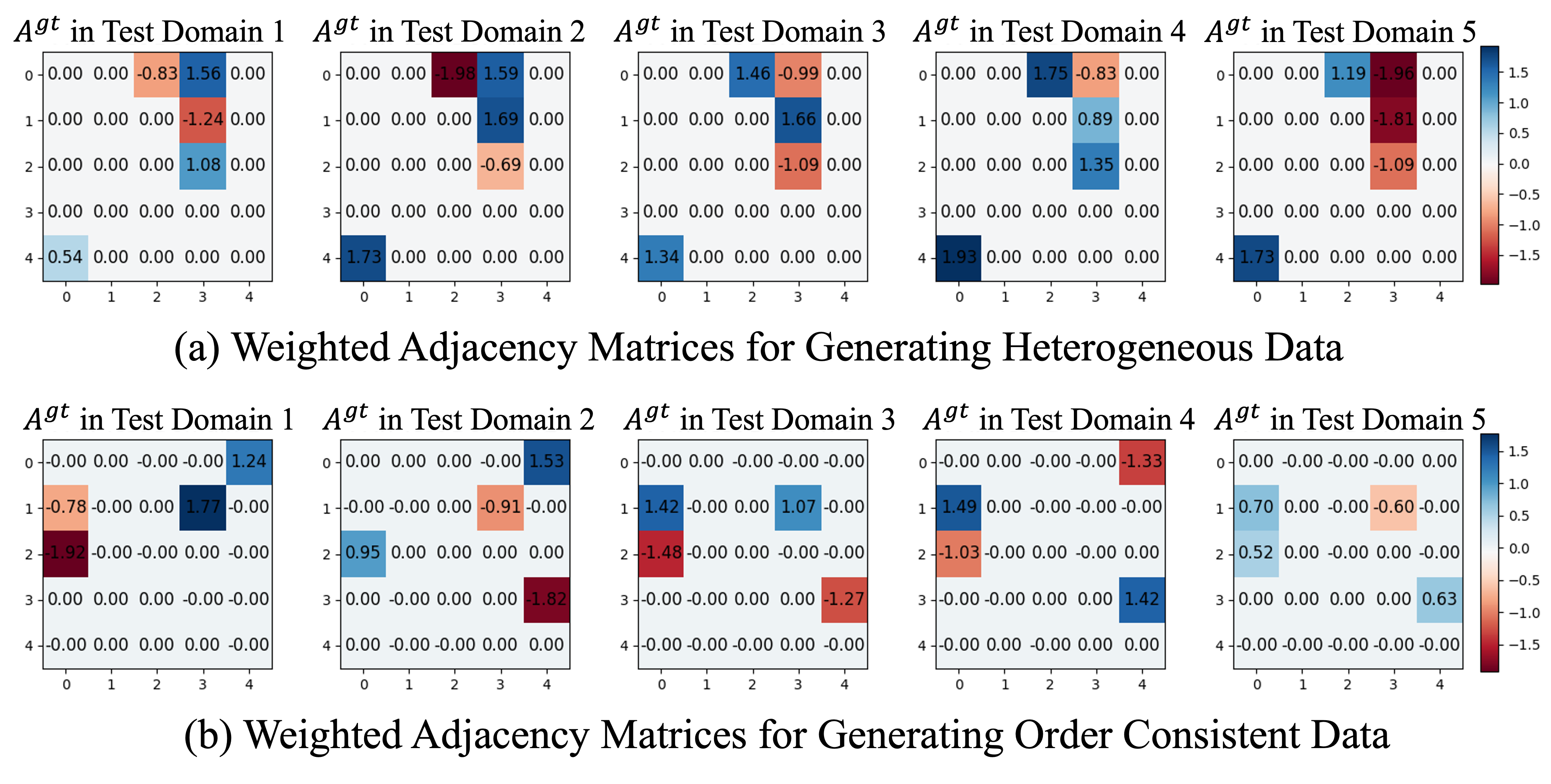}
    \caption{Visualization of the Ground Truth Weighted Adjacency Matrices for Heterogeneous Data and Order Consist Data.}
    \label{fig:Data_visual}
\end{figure}
\section{Detailed Empirical Results}\label{app:detailed_empirical_results}
To provide a comprehensive evaluation of our proposed ADAG method, we also test its performance on the real-world Sachs dataset, as presented in Section~\ref{app:real_data_results}. Additionally, we conduct an ablation study in Section~\ref{app:ablation_M} to examine how the number of training domains influences the generalization ability of the pre-trained model. This study offers empirical insights into the amount of data required for effective pre-training. Furthermore, in Section~\ref{app:various_noise}, we demonstrate that ADAG remains effective under linear SEMs with non-Gaussian noise.
\subsection{Real Data Results}\label{app:real_data_results}
Our ADAG method follows a framework consisting of a pre-training phase and a zero-shot inference procedure for DAG learning. The pre-training phase relies on access to sufficient multi-domain training data. As a result, ADAG cannot be directly applied to the Sachs dataset, which contains only 853 observations across 11 variables. To address this limitation, we design a protocol for evaluating ADAG on real data. This protocol includes two components: (1) pre-training on synthetic multi-domain data, and (2) evaluation on a hold-out subset of the Sachs dataset with limited observations.

\textbf{Pre-train on Synthetic Data. }Due to the limited availability of abundant training data, we propose generating synthetic multi-domain data with a controlled use of ground-truth information. Based on the widely accepted causal structure of the 11 variables in the Sachs dataset, we infer a topological ordering of the variables: (Plcg, PKC, PIP3, PKA, PIP2, Raf, P38, Jnk, Mek, Erk, Akt). Using this ordering, we generate 100,000 distinct DAGs that differ in structure but all conform to the specified topological order. For each DAG, we generate its corresponding weighted adjacency matrix and $n$ data observations. This results in $M = 100{,}000$ training domains, each containing $n$ observations, which we use to pre-train the nonlinear kernel map. It is worth noting that our proposed protocol is just one of several possible approaches for simulating multi-domain Sachs data. Its main advantage is that it provides access to the ground-truth DAGs and weighted adjacency matrices, offering clear insight into how well the model is trained. Moreover, it enables evaluation of the model’s generalization ability by generating additional synthetic test domains. However, since the true parameter distributions are unknown, the synthetic parameters may differ from those of the real data. An alternative approach would be to shuffle and permute the original 853 samples to simulate different domains. While this yields more realistic data, potentially closer to the unknown true parameters, it lacks explicit ground-truth information, making it difficult to assess model training quality or determine whether sub-optimization problems have converged. This can ultimately hinder the effectiveness of pretraining the nonlinear kernel map.

\textbf{Hold-out Evaluation with Limited Observations.  }Based on its performance on synthetic data presented in the main paper, our ADAG method is particularly robust in low-sample regimes. To better highlight its effectiveness in such scenarios and to avoid the computational cost of training on the full dataset, we propose sampling a subset of $n = 100$ observations from the original 853 samples. We then evaluate our ADAG method against state-of-the-art baselines using this held-out subset for comparison.

\begin{table}[hpt]
    \centering
    \caption{Empirical Results on Real Data Sachs}
    \label{tab:real_sachs}
    \begin{tabular}{cccc}
    \toprule
    Methods  & $n$ & SHD($\downarrow$) & $\#$ estimated edges  \\
    \hline 
    \multirow{2}{*}{NOTEARS}  &  853  & 13 & 17 \\
    & 100 & 19 & 17 \\
    \hline 
    \multirow{2}{*}{DAGMA} & 853 & 14 & 6  \\
    & 100 & 15 & 13 \\
    \hline 
    ADAG & 100 & 13 & 17 \\
    \bottomrule
    \end{tabular}
\end{table}

We present the empirical results of our ADAG method alongside state-of-the-art single-task DAG learning methods, NOTEARS and DAGMA, in Table~\ref{tab:real_sachs}. DAG learning accuracy is evaluated using Structural Hamming Distance (SHD), and we also report the number of predicted edges. Hyperparameters, including sparsity constraint coefficients and thresholds, are extensively tuned to optimize SHD performance. We first evaluate NOTEARS and DAGMA on the full set of 853 observations, where both methods perform well. In particular, NOTEARS achieves an SHD of 13 with 17 predicted edges, matching state-of-the-art performance. However, when the number of observations is limited to 100, their performance degrades, with SHDs of 19 (NOTEARS) and 15 (DAGMA). In contrast, our ADAG method achieves an SHD of 13 on the same 100-observation subset, matching the best performance of NOTEARS on the full dataset. This demonstrates ADAG’s robustness in low-sample regimes and its ability to generalize effectively from pre-training.

As mentioned in the \textbf{Pre-training on Synthetic Data} section, having more accurate prior knowledge of the parameter distributions allows our ADAG method to learn a better pre-trained model, which can potentially lead to improved performance on real data.

\subsection{Ablation on Number of Domains}\label{app:ablation_M}
During the pre-training phase, we observe that a sufficiently large number of training domains is necessary to effectively train the nonlinear kernel map, enabling it to produce accurate weighted adjacency matrix predictions for unseen data observations. Hence, we perform an ablation study which varies the number of data domains for training and evaluates the pre-trained models on $1000$ test domains data. We set the number of training domains to $M = 0, 500, 1000, 5000, 10000, 15000, 20000, 30000, 40000, 50000, 60000, 70000$, and report the performance in terms of reconstruction loss values and relative errors on the test domains. 
\begin{figure}[hpt]
    \centering
    \includegraphics[width=1\linewidth]{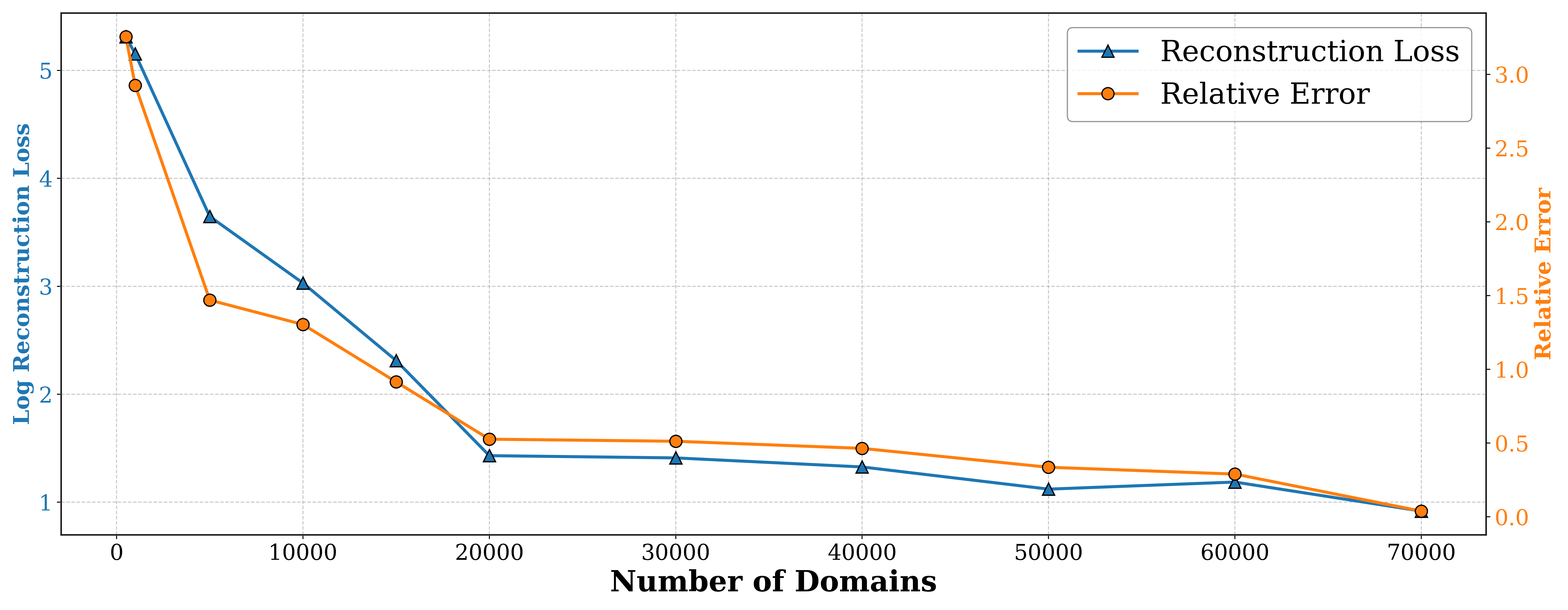}
    \caption{Nonlinear Kernel Map Generalization vs Number of Domains}
    \label{fig:M_domains}
\end{figure}
Figure~\ref{fig:M_domains} shows that as the number of training domains increases, both the reconstructed input data observations and the estimated weighted adjacency matrices become closer to the ground truth. When $M = 70{,}000$, the reconstruction loss ($2.5030$) and relative error ($0.0386$) on the test domains are comparable to those on the training domains (reconstruction loss: $2.4980$, relative error: $0.0242$). Hence, we use $M\geq 70000$ domains for training across all settings.

\subsection{Ablation Study on Various Types of Noise}\label{app:various_noise}
We also perform ablation study to apply our ADAG on data generated from linear SEMs but with non-Gaussian noise. As shown in Table \ref{tab:noise_types}, we generate synthetic data with exponential and Gumbel noise distributions, and compare our ADAG method against the state-of-the-art single-task and multi-task DAG learning baselines. 
\begin{table}[hpt]
    \centering
    \caption{Empirical Results of Baselines and our ADAG approach on Linear SEM Data Generated from ER1 d5 Graphs with Various Types of Noise.}
    \label{tab:noise_types}
    \scalebox{0.92}{
    \begin{tabular}{cccccc}
    \toprule
     Additive Noise Types & Methods & SHD$\downarrow$ & $\#$ edges & Relative Error $\downarrow$ & Runtime(s) $\downarrow$ \\
    \hline 
    \multirow{6}{*}{Exponential} & NOTEARS  &  0.5640\scalebox{0.7}{$\pm 0.0238$} & 5.3480\scalebox{0.7}{$\pm 0.0243$} & 0.1620\scalebox{0.7}{$\pm 0.0046$} &  0.0982 \\
    & DAGMA & 0.5590\scalebox{0.7}{$\pm 0.0152$} & 5.3900\scalebox{0.7}{$\pm 0.0147$} & 0.1570\scalebox{0.7}{$\pm 0.0044$} &  0.5414 \\
    & MetaDAG & 5.5000\scalebox{0.7}{$\pm 0.5000$} & 3.0000\scalebox{0.7}{$\pm 1.0000$} & 0.9489\scalebox{0.7}{$\pm 0.0599$} &  0.2801 \\
    & CD-NOD & 1.1500\scalebox{0.7}{$\pm 0.1212$} & - & - &  2.8525 \\
    & MultiDAG & 1.3700\scalebox{0.7}{$\pm 0.0363$} & 4.9940\scalebox{0.7}{$\pm 0.1301$} & 0.2755\scalebox{0.7}{$\pm 0.0085$} &  0.0310 \\
    & ADAG & \bf{0.1130}\scalebox{0.7}{$\pm \bf{0.0150}$} & 4.8970\scalebox{0.7}{$\pm 0.0161$} & 0.2017\scalebox{0.7}{$\pm 0.0080$} &  \bf{0.0003} \\
    \midrule
    \multirow{6}{*}{Gumbel} & NOTEARS  &  0.4599\scalebox{0.7}{$\pm 0.0572$} & 5.2969\scalebox{0.7}{$\pm 0.0405$} & 0.1460\scalebox{0.7}{$\pm 0.0071$} &  0.1002 \\
    & DAGMA & 0.4639\scalebox{0.7}{$\pm 0.0575$} & 5.3330\scalebox{0.7}{$\pm 0.0348$} & \bf{0.1428}\scalebox{0.7}{$\pm 0.0053$} &  0.5696 \\
    & MetaDAG & 4.5000\scalebox{0.7}{$\pm 0.5000$} & 4.0000\scalebox{0.7}{$\pm 1.5000$} & 0.9132\scalebox{0.7}{$\pm 0.0470$} &  0.2488 \\
    & CD-NOD & 1.2513\scalebox{0.7}{$\pm 0.2029$} & - & - &  2.6850 \\
    & MultiDAG & 1.2261\scalebox{0.7}{$\pm 0.0442$} & 6.0031\scalebox{0.7}{$\pm 0.0500$} & 0.2996\scalebox{0.7}{$\pm 0.0007$} &  0.0278 \\
    & ADAG & \bf{0.0570}\scalebox{0.7}{$\pm \bf{0.0050}$} & 4.9450\scalebox{0.7}{$\pm \bf{0.0057}$} & 0.1442\scalebox{0.7}{$\pm 0.0021$} &  \bf{0.0003} \\
    \bottomrule
    \end{tabular}
}    
\end{table}
The empirical results are consistent with those reported in the main paper for data with equal-variance Gaussian noise. Our ADAG method achieves optimal performance in terms of DAG inference accuracy (lowest SHD) and zero-shot inference efficiency. Additionally, we report the standard deviation of the expected performance over 1,000 domains across three trials. Compared to all baselines, our method exhibits the smallest standard deviation in both SHD, highlighting the stability and reliability of the trained kernel map.

% \subsection{Ablation Study on Denser Graphs}
% \textcolor{red}{Experiment on $d=10$ cases. Add the ER2 case.}

% \subsection{Ablation Study on Larger Data}
% \textcolor{red}{Not recommend. Too time-consuming, requires extensive tuning of every possible hyperparameters.}

\section{Theoretical Justifications}\label{sec:theory}
Intuitively, the optimization problem in our pre-training process can be separated into two sub-problems: (i) learning the estimated adjacency matrix $A$ for each domain from input data $\Xb$ by minimizing the reconstruction loss $\| \Xb - A^T \Xb \|_F^2$ under the acyclicity constraint, which corresponds to the standard DAG learning problem; and (ii) learning the nonlinear maps from the input data $\Xb$ of each domain to its corresponding weighted adjacency matrix $A$. A well-trained ADAG model requires both sub-problems to be effectively solved.

In the following section, we first discuss whether the weighted adjacency matrices with the ground-truth DAG structure can be identified for all domains (Section~\ref{app:graph_ident_theory}), and then examine the identifiability of the parameters in the nonlinear kernel map (Section~\ref{app:params_ident_theory}).

\subsection{Graph Identifiability}\label{app:graph_ident_theory}
During the pre-training phase of our proposed ADAG approach, we inherently perform multi-task DAG learning across $M$ domains.

When all $M$ domains have sufficient sample complexity, the identifiability problem reduces to whether the causal graph for each individual domain can be uniquely identified from its corresponding observations. For the linear SEM adopted in our framework, existing identifiability results show that the causal graph is identifiable under the following conditions: (1) the additive noise is non-Gaussian~\citep{shimizu2006linear}, or (2) the additive noise is Gaussian with equal noise variances~\citep{peters2014identifiability}. Based on these results, we assert that if the SEMs are linear and the noise satisfies either of these conditions, our method can identify the unique causal graph for each domain. These identifiability results may also be extended to nonlinear SEMs with additive noise, as discussed in \citet{hoyer2008nonlinear}, \citet{mooij2009regression}, and \citet{peters2012identifiability}.

A more interesting scenario occurs when the observations from some domains are not sufficiently complex to identify a unique DAG. In \cite{chen2021multi}, it was shown that by minimizing the joint data loss from all $M$ domains as discussed in our Section 3.3, this setting is able to recover the order of non-identifiable graphs if (1) the sample complexity index $\dfrac{d}{s}\sqrt{\dfrac{n}{d\log d}\dfrac{(M')^2}{M}}$ is sufficiently large, (2) the sample size $n$ is also sufficiently large (on the order of $\log M + (p+1)\log d$), and (3) the total domain number $M$ is bounded above by $O(d\log d)$. Here, $d$ is the number of random variables, $n$ is the number of observations in each domain, $p$ is the maximum number of parents in DAGs, $s$ is the size of the support union, and $M'$ is the number of domains with identifiable data among the total $M$ domains. While we anticipate the same identifiability results hold true for our learning problem, we also point out that our ADAG focuses on the small data regime, i.e., $n$ is of a similar size as $d$. Under this circumstance, conditions (1) and (2) may be violated. It suggests a possible relaxation of the theoretical results in \cite{chen2021multi} and an improved identifiability property under our foundation model setting. We leave such theoretical investigations to a future work.

%Since \citet{hoyer2008nonlinear}, \citet{mooij2009regression}, and \citet{peters2012identifiability} have shown that nonlinear SEMs with additive noise are identifiable, we can similarly claim that if our method is extended to the nonlinear formulation as described in Section 3.3 of the main paper, it can still identify the unique causal graph for each domain.

\subsection{Parameter Identifiability of $A$}\label{app:params_ident_theory}

%\textcolor{red}{We can view our nonlinear kerner map learning problem as regression model from $[M, nd]$ to $[M, d^2]$.}
%%%%%%%%%%%%%%%%%%%%%%%%%%%%%%%%%%%%%%%%%%%%%%%%%%%%%%%%%%%%

In addition to the capability of identifying the common topological ordering across all domains, ADAG is also capable of identifying the weighted adjacency matrix parameters, i.e., $A$. Under this setting, the learning of parameters can be seen as a discrete version of the learning problem considered in \cite{yu2024nonlocal}, and one can show that the space in which the values of $A$ are identifiable is the closure of a data-adaptive reproducing kernel Hilbert space (RKHS). In particular, when the common topological order is determined as a permutation $\pi$ over $[1:d]\coloneqq (1, 2, \cdots, d)$ over all domains, we denote the corresponding connectivity matrix as:
$$[C(\pi)]_{ij}=1, \text{ if }\pi(i)<\pi(j),$$
$$[C(\pi)]_{ij}=0, \text{ if }\pi(i)\geq\pi(j).$$
Then, we can rewrite the weighted adjacency matrix $A$ as:
$$A=\tilde{A}\circ C(\pi),$$
where $\tilde{A}_{ij}=0$ if $[C(\pi)]_{ij}=0$, and $\tilde{A}_{ij}=A_{ij}$ if $[C(\pi)]_{ij}=1$. $\circ$ denotes the Hadamard product. One can see that the parameter identifiability problem is equivalent to a learning problem of the $d(d-1)/2$ parameters in $\tilde{A}$. Without loss of generality, we consider $\pi$ to be the identity permutation to simplify the notations. Then, we have the following result:
\begin{lemma}[Space of Identifiability]
The loss function
\begin{equation}
\sum_{\eta=1}^M ||X^\eta_{1:d}(1:n)-(\tilde{A} \circ C(\pi))^\top X^\eta_{1:d}(1:n)||^2_F
\end{equation}
has a unique minimizer $\tilde{A}$ is the closure of a data-adaptive RKHS $H_G$ with a reproducing kernel $\bar{G}$ determined by the training data:
$$\bar{G}_{ijk}=[\rho'_{j}\rho'_{k}]^{-1}G_{jk}, \text{ if }\pi(i)<\pi(j),\, \pi(i)<\pi(k), \text{ else }\bar{G}_{ijk}=0.$$
Here $\rho'$ is the density of the empirical measure $\rho$ defined by
$$\rho'_{j}\coloneqq\dfrac{1}{Z}\sum_{\eta=1}^M\sum_{s=1}^n  |X^\eta_{j}(s)|,$$
with $Z$ being the normalizing constant, and $G$ is defined by
$${G}_{jk}\coloneqq\sum_{\eta=1}^M\sum_{s=1}^n  X^\eta_{j}(s) X^\eta_{k}(s).$$
\end{lemma}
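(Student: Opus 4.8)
The plan is to exploit the fact that, once the permutation $\pi$ (hence the support encoded by $C(\pi)$) is fixed, the loss is a convex quadratic in the free entries of $\tilde A$, so the entire statement reduces to a linear least-squares identifiability question that can be read off from the associated normal operator. Taking $\pi$ to be the identity, the residual at coordinate $(j,s)$ of domain $\eta$ is $X^\eta_j(s)-\sum_{i<j}\tilde A_{ij}X^\eta_i(s)$, so the objective splits as a direct sum over the target (child) index $j$: for each $j$ it is a single pooled regression of $X_j$ on the predictors $\{X_i:i<j\}$ with all $M$ domains and $n$ samples stacked. First I would write the block normal equations $\sum_{i<j}G_{ki}\tilde A_{ij}=G_{kj}$ for all $k<j$, where $G_{jk}=\sum_{\eta,s}X^\eta_j(s)X^\eta_k(s)$ is exactly the data covariance in the statement; the connectivity constraint enters only through the support restriction $i<j$ (equivalently the indicator $[C(\pi)]_{ij}=1$), which is already baked into the ``$\pi(i)<\pi(j)$'' conditions defining $\bar G$.

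Next I would recast this in the data-adaptive RKHS / variational framework for inverse problems used in \citet{yu2024nonlocal}. The idea is to view $\tilde A$ as an element of a weighted sequence space $L^2(\rho)$, where the empirical exploration measure has density $\rho'_j\propto\sum_{\eta,s}|X^\eta_j(s)|$ — precisely the $\rho'$ in the statement, quantifying how strongly each variable is explored by the data. In this space the loss becomes $\langle\tilde A,\mathcal L\,\tilde A\rangle_{L^2(\rho)}-2\langle\tilde A,\phi\rangle_{L^2(\rho)}+\text{const}$ for a self-adjoint, positive semidefinite normal operator $\mathcal L$ built from $G$ and a data-dependent right-hand side $\phi$. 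The central computation is to show that the reproducing kernel of the RKHS generated by $\mathcal L$ is obtained by normalizing the data covariance by the measure densities, i.e. that it equals $[\rho'_j\rho'_k]^{-1}G_{jk}$ on the admissible support and $0$ otherwise, matching $\bar G$. This is the standard relationship ``reproducing kernel equals covariance divided by $\rho(r)\rho(s)$'' specialized to the discrete, multi-domain setting, and reconciling the three-index form of $\bar G$ with the per-block structure of $\mathcal L$ is where the bookkeeping must be done carefully.

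Finally I would invoke the abstract identifiability theorem for such quadratic functionals: the RKHS $H_G$ with kernel $\bar G$ coincides with the range of $\mathcal L^{1/2}$, its closure in $L^2(\rho)$ equals $\mathrm{Null}(\mathcal L)^{\perp}$, and the loss is strictly convex on this closure while constant along $\mathrm{Null}(\mathcal L)$; hence the minimizer restricted to $\overline{H_G}$ is unique, which is exactly the assertion. The main obstacle I anticipate is not the convexity argument but the operator-to-kernel identification in the second step: one must pin down the weighted space and the exact $\rho'$-normalization so that the kernel of $\mathcal L$ is literally $\bar G$, and — since the paper explicitly targets the small-sample regime $n\approx d$ where the blocks $[G_{ki}]_{k,i<j}$ may be rank-deficient — handle the genuinely degenerate case, so that ``unique minimizer in the closure of $H_G$'' is well defined even when the unconstrained least-squares solution is not. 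A secondary check is that the per-$j$ block decomposition assembles consistently into a single RKHS with kernel $\bar G$ over the full admissible index set.
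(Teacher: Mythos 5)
Your proposal is correct and follows essentially the same route as the paper's proof: expand the loss as a convex quadratic $\langle \mathcal{L}_{\bar G}\tilde A,\tilde A\rangle_{L^2_\rho}-2\langle\tilde A,\phi\rangle_{L^2_\rho}+\mathrm{const}$, identify the normal operator whose kernel (after the $[\rho'_j\rho'_k]^{-1}$ normalization) is $\bar G$, and conclude that the minimizer is unique on $\mathrm{Null}(\mathcal{L}_{\bar G})^\perp$, i.e.\ the closure of $H_G$. Your explicit column-wise block decomposition and the normal equations $\sum_{i<j}G_{ki}\tilde A_{ij}=G_{kj}$ are just the per-child-index form of the paper's operator $\mathcal{L}_{\bar G}$, which the paper leaves implicit, and your attention to the rank-deficient case is a welcome refinement rather than a departure.
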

\begin{proof}
The loss function can be expanded as:
\begin{align*}
&\sum_{\eta=1}^M ||X^\eta_{1:d}(1:n)-(\tilde{A} \circ C(\pi))^\top X^\eta_{1:d}(1:n)||^2_F\\
=&\sum_{\eta=1}^M\sum_{s=1}^n ||(\tilde{A} \circ C(\pi))^\top X^\eta_{1:d}(s)||^2-2\sum_{\eta=1}^M\sum_{s=1}^n (X^\eta_{1:d}(s))^\top(\tilde{A}^\eta \circ C(\pi))^\top X^\eta_{1:d}(s)+Const\\
=&\langle \mathcal{L}_{\bar{G}} \tilde{A},\tilde{A} \rangle_{L^2_{\rho}}-2\langle \tilde{A},(\tilde{A})^D \rangle_{L^2_{\rho}} +Const.
\end{align*}
$\mathcal{L}_{\bar{G}}$ is an operator mapping from an upper triangular $d\times d$ matrix to another upper triangular $d\times d$ matrix, defined as: 
$$(\mathcal{L}_{\bar{G}} \tilde{A})_{ij}= \sum_{\eta=1}^M\sum_{s=1}^n \sum_{k=i+1}^d \tilde{A}_{ik} X^\eta_{j}(s) X^\eta_{k}(s)= \sum_{k=i+1}^d \tilde{A}_{ik} \bar{G}_{ijk},$$
and $(\tilde{A})^D$ is an upper triangular $d\times d$ matrix satisfying:
$$\langle \tilde{A},(\tilde{A})^D \rangle_{L^2_{\rho}}=\sum_{\eta=1}^M\sum_{s=1}^n (X^\eta_{1:d}(s))^\top(\tilde{A} \circ C(\pi))^\top X^\eta_{1:d}(s).$$
This loss function has a unique minimizer in Null$(\mathcal{L}_{\bar{G}})^\perp$.
\end{proof}

Intuitively, increasing $M$ and $n$ helps to include more data, which would enhance the invertibility of $\bar{G}$ and enlarge the space of identifiability for $\tilde{A}$. For further discussions on how linear transformer functions enhance the identifiability and solve the inverse linear regression problem, we refer to \cite{yu2024nonlocal} and \cite{lu2025transformer}.

\end{document}